\newtheorem{theorem}{Theorem}
\newtheorem{definition}{Definition}
\newtheorem{corollary}{Corollary}
\newtheorem{prop}{Proposition}
\definecolor{green(pigment)}{rgb}{0.1607, 0.3843, 0.0941}
\definecolor{blue(pigment)}{rgb}{0., 0.1484, 0.6992}
\definecolor{orange(pigment)}{rgb}{0.6, 0.298, 0.}
\definecolor{blue(back)}{rgb}{0.9058, 0.9019, 0.9725}
\title{Certified Policy Smoothing for Cooperative Multi-Agent Reinforcement Learning}
\author{
    Ronghui Mu\textsuperscript{\rm 1}\footnote{Ronghui conducted this research when she was a visiting PhD student at the University of Exeter.},
    Wenjie Ruan\textsuperscript{\rm 2}$^{\dagger}$,
    Leandro Soriano Marcolino\textsuperscript{\rm 1},
    Gaojie Jin\textsuperscript{\rm 3},
    Qiang Ni\textsuperscript{\rm 1}
}
\begin{document}

\maketitle

\begin{abstract}
Cooperative multi-agent reinforcement learning (c-MARL) is widely applied in safety-critical scenarios, thus the analysis of robustness for c-MARL models is profoundly important. However, robustness certification for c-MARLs has not yet been explored in the community. In this paper, we propose a novel certification method, which is the first work to leverage a scalable approach for c-MARLs to determine actions with guaranteed certified bounds. c-MARL certification poses two key challenges compared with single-agent systems: (i) the accumulated uncertainty as the number of agents increases; (ii) the potential lack of impact when changing the action of a single agent into a global team reward. These challenges prevent us from directly using existing algorithms. Hence, we employ the false discovery rate (FDR) controlling procedure considering the importance of each agent to certify per-state robustness and propose a tree-search-based algorithm to find a lower bound of the global reward under the minimal certified perturbation. As our method is general, it can also be applied in single-agent environments. We empirically show that our certification bounds are much tighter than those of the state-of-the-art RL certification solutions. We also run experiments on two popular c-MARL algorithms: QMIX and VDN, in two different environments, with two and four agents. The experimental results show that our method produces a meaningful guaranteed robustness for all models and environments. Our tool \textbf{CertifyCMARL} is available at \url{https://github.com/TrustAI/CertifyCMARL}.         
\end{abstract}

\section{Introduction}

 Recently, cooperative multi-agent reinforcement learning (c-MARL) has attracted increasing attention from researchers and is beneficial for a wide range of applications in the real world, such as autonomous cars \cite{shalev2016safe}, traffic lights control \cite{van2016coordinated}, packet delivery \cite{ye2015multi} and wireless communication \cite{station}. As it is widely involved in safety-critical scenarios, there is an urgent need to analyze the robustness of c-MARLs.
 
 Reinforcement learning (RL) aims to find the best actions for agents that can optimise the long-term reward by interacting with the surrounding environments \cite{qmix,cai2023reprem}. When there is a team of agents, the system needs to jointly optimise the action of each agent to maximise the reward of the team. In c-MARL, as the number of agents increases, the joint action space of the agents grows exponentially, requiring the learning of policies in a decentralised manner \cite{oliehoek2008optimal}. In this approach, each agent learns its own policy, based on its local action-observation history, and then forms a centralised action-value that is conditioned to the global state and joint actions.
 
Deep neural networks (DNNs) are known to be vulnerable to tiny, non-random, ideally human-invisible perturbations of the input, which can lead to incorrect predictions \cite{szegedy,xu2022quantifying,jin2022enhancing,wang2022deep,mu20223dverifier,yin2022dimba,ruan2019global}. RL has also been shown to be susceptible to perturbation in the observations of an RL agent \cite{huang2017adversarial,behzadan2017vulnerability} or in environments \cite{gleave2019adversarial}. Some adversarial defence works for RL are proposed \cite{donti2020enforcing,eysenbach2021maximum,shen2020deep,defense} and then towards these defences, stronger attacks are proposed \cite{salman2019provably,russo2019optimal}. To end this repeated game, \citet{wu2021crop} and \citet{kumar2021policy} proposed to use probabilistic approaches to provide robustness certification for RLs. Concerning c-MARL, \citet{lin2020robustness} addressed the challenges of attacking such systems and proposed adding perturbations to the state space. To date, the robustness certification on c-MARL has not been touched upon by the community.

Compared to the RL system with a single agent, certifying c-MARL is a more challenging task. \textbf{Challenge 1}: the action space grows exponentially with the number of agents; moreover, for each time step, the agents need to be certified simultaneously, accumulating uncertainty. \textbf{Challenge 2}: changing the action of one agent may not alter the team reward, thus, instead of following existing certification works on a single agent, new criteria should be raised to evaluate the robustness for the multi-agent system. Therefore, to cope with such challenges, we propose two novel methods to certify the robustness of each state and of the whole trajectory. 
 
We first propose a smoothed policy where each agent chooses the most frequent action when its observation is perturbed, and then we derive the certified bound of perturbation for each agent per step, within which the chosen action of the agent will not be altered. When evaluating the robustness of all agents per time step, to tackle \textbf{Challenge 1}, we identify the multiple test problem and propose to {\it correct} the p-value by multiplying the importance factor of each agent. We then employ the Benjamini-Hochberg (BH) procedure with corrected p-value to control the selective false discovery rate (FDR). For the certification of robustness of the global reward, to deal with \textbf{Challenge 2}, we propose a tree-search-based algorithm to find the certified lower bound of the perturbation and the lower bound of the global reward of the team under this perturbation. In this paper, we focus on certifying the robustness of value-based c-MARLs under a $l_2$ norm bounded attack. Our work can be easily extended to evaluate $l_p$ norm based robustness by using different sampling distributions, such as the generalised Gaussian distribution as indicated in \citet{hayes2020extensions}.  
 
 Our main \textbf{contributions} can be summarised as: {\bf i)} for the first time, we propose a solution to certify the robustness of c-MARLs, which is a {\it general} framework that can also be employed in a single-agent system; {\bf ii)} we propose a new criterion to enable the {\it scalable} robustness certification per state for c-MARLs by considering the importance of each agent to reduce the error of selective multiple tests; and {\bf iii)} we propose a tree-search-based method to obtain the certified lower bound of the global team reward, which enables a tighter certification bound than the state-of-the-art certification methods. 
 
 

\section{Background}

\subsection{Cooperative Multi-agent Reinforcement Learning}
Most c-MARL methods use the centralised training scheme to guide decentralised execution, such as value decomposition networks (VDN) \cite{vdn} and QMIX \cite{qmix}. In this paper, we focus on certifying the robustness of these value-based c-MARLs.

We consider a fully cooperative multi-agent game $G$ as a Dec-POMDP \cite{kraemer2016multi}, which is defined by the tuple $G=\left\langle S,\mathcal{A},P,r,Z,\mathcal{O},N,\gamma\right\rangle$, in which each agent $n \in \{1,2,...,N\}$ chooses an action $a^n \in \mathcal{A}$ in each state $s \in S$ to form the joint action $\mathbf{a}=\{a^1,a^2,...,a^N\}$. The same reward function is shared by all agents $r(s,\mathbf{a})$. $\gamma$ is a discount factor. We suppose that each agent draws an observation $z^n \in Z$ given the observation function $\mathcal{O}(s,\mathbf{a})$. 

Each agent has a stochastic policy $\pi^n(a^n|h^n)$ where $h^n$ is the action-observation history $h^n  \in \mathcal{H}$. The joint policy $\pi$ has a joint discount return $R_t=\sum^{\infty}_{i=0}(\gamma^ir_{t+i})$ and an action-value
function: $Q^{\pi}(s_t,\mathbf{a}_t)=\mathbb{E}_{s_{t+1}:\infty,\mathbf{a}_{t+1}:\infty}[R_t|s_t,\mathbf{a}_t]$. Given an action-value function $Q^{\pi}$, we define a greedy policy as $\pi(s_t):\mathop{\arg\max}_{\mathbf{a}_t\in \mathcal{A}}Q^{\pi}(s_t,\mathbf{a}_t)$ that returns the optimal action.

\subsection{Randomized Smoothing for Classification}

Randomised smoothing \citep{cohen2019certified} was developed to evaluate probabilistic certified robustness for classification tasks. It aims to construct a smoothed model $g(x)$, which can produce the most probable prediction of the base classifier $f(x)$ over perturbed inputs from Gaussian noise in a test instance. The smoothed classifier $g(x)$ is supposed to be provably robust to $l_2$-norm bounded perturbations within a certain radius:
\begin{theorem}\label{the:cohen}
 \cite{cohen2019certified} For a classifier $f: \mathbb{R} \to \mathcal{Y}$, suppose $c \in \mathcal{Y}$, let $\delta \sim \mathcal{N}(0, \sigma^2 I)$, the smoothed classifier be 
 $g(x):=\mathop{\arg\max}\limits_c \mathbb{P}(f(x+\delta)=c)$,
 suppose $\underline{p_A},\overline{p_B} \in [0,1]$, if
 \begin{equation}
     \mathbb{P} (f(x+\delta)=c_A)\geq \underline{p_A} \geq \overline{p_B} \geq \mathop{\max}\limits_{c\neq c_A}\mathbb{P}(f(x+\delta)=c),
 \end{equation}
 then $g(x+\epsilon)=c_A$ for all $||\epsilon ||_2 \leq R$, where
 \begin{equation}
     R=\frac{\sigma}{2}(\Phi^{-1}(\underline{p_A})-\Phi^{-1}(\overline{p_B})).
 \end{equation}
\end{theorem}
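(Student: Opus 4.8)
The plan is to establish the claim via the Neyman--Pearson lemma, comparing the Gaussian measures $\mu := \mathcal{N}(x,\sigma^2 I)$ and $\nu := \mathcal{N}(x+\epsilon,\sigma^2 I)$ induced respectively by $x+\delta$ and $x+\epsilon+\delta$. To show $g(x+\epsilon)=c_A$ it suffices to prove $\mathbb{P}(f(x+\epsilon+\delta)=c_A) \ge \mathbb{P}(f(x+\epsilon+\delta)=c)$ for every $c \ne c_A$; so I would lower bound the left-hand probability and, uniformly over $c$, upper bound the right-hand one, both in terms of $\underline{p_A}$, $\overline{p_B}$, $\|\epsilon\|_2$ and $\sigma$, and then compare.

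For the lower bound, observe that the likelihood ratio $d\nu/d\mu$ is a strictly increasing function of the linear statistic $\ell(z):=\langle \epsilon, z-x\rangle$, so its sub-level sets are half-spaces $\mathcal{H}^- := \{z : \ell(z) \le t\}$. Choosing $t := \sigma\|\epsilon\|_2\,\Phi^{-1}(\underline{p_A})$ makes $\mu(\mathcal{H}^-) = \underline{p_A}$, since under $\mu$ the statistic $\ell$ is a one-dimensional Gaussian with mean $0$ and variance $\sigma^2\|\epsilon\|_2^2$. As the decision region $A := \{z : f(z)=c_A\}$ satisfies $\mu(A) \ge \underline{p_A}$, the Neyman--Pearson lemma gives $\nu(A) \ge \nu(\mathcal{H}^-)$. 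A second one-dimensional computation (under $\nu$ the statistic $\ell$ has mean $\|\epsilon\|_2^2$ and the same variance) yields $\nu(\mathcal{H}^-) = \Phi(\Phi^{-1}(\underline{p_A}) - \|\epsilon\|_2/\sigma)$, hence $\mathbb{P}(f(x+\epsilon+\delta)=c_A) \ge \Phi(\Phi^{-1}(\underline{p_A}) - \|\epsilon\|_2/\sigma)$.

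For the upper bound I would run the symmetric argument: for any $c \ne c_A$ the region $B_c := \{z : f(z) = c\}$ has $\mu(B_c) \le \overline{p_B}$, so by Neyman--Pearson $\nu(B_c)$ is at most the $\nu$-mass of the super-level half-space $\mathcal{H}^+ := \{z : \ell(z) \ge t'\}$, with $t'$ picked so that $\mu(\mathcal{H}^+) = \overline{p_B}$; the analogous computations give $\mathbb{P}(f(x+\epsilon+\delta)=c) \le \Phi(\Phi^{-1}(\overline{p_B}) + \|\epsilon\|_2/\sigma)$. Since $\Phi$ is strictly increasing, the inequality $\Phi(\Phi^{-1}(\underline{p_A}) - \|\epsilon\|_2/\sigma) \ge \Phi(\Phi^{-1}(\overline{p_B}) + \|\epsilon\|_2/\sigma)$ holds iff $\|\epsilon\|_2 \le \tfrac{\sigma}{2}\big(\Phi^{-1}(\underline{p_A}) - \Phi^{-1}(\overline{p_B})\big) = R$, which is exactly the hypothesis; chaining the three displays shows $c_A$ attains the $\arg\max$ defining $g$, i.e.\ $g(x+\epsilon)=c_A$.

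The main obstacle --- and essentially the only nontrivial part --- is the Neyman--Pearson step: recognizing that the worst-case decision region of a given nominal mass is a half-space orthogonal to $\epsilon$, and carrying out the measure bookkeeping so that this half-space has exactly mass $\underline{p_A}$ (resp.\ $\overline{p_B}$) under $\mu$. One should also dispatch the degenerate cases ($\epsilon=0$ is trivial; $\underline{p_A}$ or $\overline{p_B}$ equal to $0$ or $1$) and the boundary case $\|\epsilon\|_2 = R$, where the two probabilities merely tie and $c_A$ remains a maximizer. Everything else reduces to projecting the isotropic Gaussian onto the direction $\epsilon/\|\epsilon\|_2$ and invoking monotonicity of $\Phi$.
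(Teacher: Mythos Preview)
Your proposal is correct and is precisely the Neyman--Pearson argument given in the original reference \cite{cohen2019certified}. Note, however, that the present paper does not supply its own proof of this theorem: it is quoted as background and attributed to Cohen et al., so there is no in-paper proof to compare against; your write-up simply reconstructs the cited one.
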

Here $\Phi^{-1}$ is the inverse cumulative distribution function (CDF) of the normal distribution.

\section{Policy Smoothing for c-MARLs}

In this section, we first outline an intuitive approach to certifying RL based on current classifier certification. Then, we sketch the challenges preventing the direct use of the intuitive approach and present how to address these challenges. 

\subsection{Problem Formulation}
We aim to design a robust policy for multi-agent reinforcement learning algorithms. Following the standard setting of existing adversarial attacks on c-MARLs, e.g. \cite{lin2020robustness}, where the adversarial perturbation is added to each step's observation of each agent, our proposed policy is expected to be provably robust against the perturbation bounded by the $l_2$-norm around the observation of each agent.

\begin{definition}\label{def:1}
(Smoothed policy) Given a trained multi-agent reinforcement learning network $Q^{\pi}$ with policy $\pi$, suppose that there are N agents, at the time step $t$, let $\forall s_{t} \in S$, given that the noise vector $\Delta_t=(\delta^1_t,...,\delta^N_t)$ is $i.i.d \quad \mathcal{N}(0, \sigma^2 I)$, the joint smoothed policy can be represented as    
\begin{equation}\label{eq:1}
\begin{aligned}
\tilde{\pi}\left(s_{t}\right)&=\mathop{\arg\max}\limits_{\mathbf{a}_t \in \mathcal{A}} \widetilde{Q}^{\pi}\left(s_{t} + \Delta_t , \mathbf{a}_t\right) \\
\end{aligned}
\end{equation}
\end{definition}

To certify the robustness of the smoothed policy, we define the certification robustness for a per-step action as 
\begin{equation}
  \begin{aligned}
   \tilde{\pi}_t(s_t)=\tilde{\pi}_t(s_t+\epsilon_t ) \quad s.t. \forall \epsilon_t, ||\epsilon_t||_2 \leq D
  \end{aligned}
\end{equation}
where $\epsilon_t \in \mathbb{R}^{ N}$ represents the maximum perturbation applied to the observations of each agent at the $t$-th time step. In other words, for each agent, in the presence of the $l_2$-norm bounded perturbation in each state, the smoothed policy is expected to return the same action that is most likely to be selected in the unperturbed state $s_t$.
\subsection{Intuitive Approach}
\begin{algorithm}[t]
\caption{Intuitive Policy Smoothing for Certifying \\Per-state Action}
\label{alg:algorithm1}
\textbf{Input}: Trained $Q^{\pi}$  with $N$ agents\\
\textbf{Parameter}: sampling times $M$; Gaussian distribution parameter $\sigma$;  confidence parameter $\alpha$ \\
\begin{algorithmic}[1]
\Function{smoothing}{$M, Q,{\alpha}, {\sigma}$}
\For {$m \gets 1,M$} \Comment {Get smoothed policy $\tilde{\pi}$}
    \State generate $\Delta_m=(\delta^1_m,...,\delta^N_m)$  i.i.d $\mathcal{N}(0, \sigma^2 I)$ 
    \State $s' \gets s + \Delta_m$ 
    \State $\mathbf{a} \gets \pi(s')$
    \State Add $\mathbf{a} \to Actlist$
    \EndFor
\State \textbf{return} $Actlist$
\EndFunction
\State $Actlist \gets \textproc{smoothing}(M, Q^{\pi},{\alpha}, {\sigma})$
\State $\mathbf{a}^m, \mathbf{a}^r,ct_1, ct_2  \gets$ Top two action sets with their counts
\If {$BioPVALUE(ct_1,ct_1+ct_2,0.5)\leq \alpha$}
\State $Cert \gets True$ \Comment{Get certified radius for $\tilde{\pi}$}
\State $\underline{p_\mathbf{a^m}},\overline{p_{\mathbf{a}^r}} \gets MultiConBnd(Counts(Actlist),\alpha)$
\State $D \gets \frac{\sigma}{2}\left(\Phi^{-1}\left({\underline{p_{\mathbf{a}^m}}}\right)-\Phi^{-1}\left(\overline{p_{\mathbf{a}^r}}\right)\right)$
\Else
\State $Cert \gets False$, $D \gets 0$
\EndIf
\State \textbf{return} $\mathbf{a}, d, Cert$
\end{algorithmic}
\end{algorithm}

Intuitively, the randomised smoothing can be adapted to certify the robustness of the per-state action in RLs by replacing the classifier $f(x)$ with policy $\pi(s_t)$. For the certification of each step, Monte Carlo randomised sampling is used to estimate the smoothed policy $\tilde{\pi}$. As shown in Algorithm \ref{alg:algorithm1}, we record the action vector $\mathbf{a}$, which is a combination of actions taken by all agents at each sampling step. The most likely selected action set is chosen as the action taken by $\tilde{\pi}$. A larger number of samples can be used to estimate the lower bound on the probability ($\underline{p_{\mathbf{a}^m}}$) of the most frequently selected action set, $\mathbf{a}^m$, and the upper bound on the probability ($\overline{p_{\mathbf{a}^r}}$) of the second most frequently selected (``runner-up") action, $\mathbf{a}^r$. The function \textproc{MultiConBnd} in Algorithm \ref{alg:algorithm1} is based on a Chi-Square approximation \cite{goodman1965simultaneous}, which takes the number of observations for each category as input and returns the $(1-\alpha)$ confidence levels. 
\begin{prop}
If the certification in Algorithm 1 returns the action set $\mathbf{a}^m:(a^{m,1},a^{m,2},...,a^{m,N})$ in the time step $t$ with a certified radius $D=\frac{\sigma}{2}\left(\Phi^{-1}\left(\underline{p_{\mathbf{a}^m}}\right)-\Phi^{-1}\left(\overline{p_{\mathbf{a}^r}}\right)\right)$
then with probability at least $(1-\alpha)$, the smoothed policy $\tilde{\pi}(s_{t}+\epsilon_t)$ chooses the action $\mathbf{a}^m$, $\forall ||\epsilon_t||_2 \leq D$.
\end{prop}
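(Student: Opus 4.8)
The strategy is to reduce the statement to Theorem~\ref{the:cohen} by regarding the joint action set $\mathbf{a}\in\mathcal{A}$ as the ``class label'' of a classifier. Concretely, identify the base classifier $f$ of Theorem~\ref{the:cohen} with the (greedy) base policy $\pi$, the input $x$ with the state $s_t$, and the noise $\delta$ with the concatenated perturbation $\Delta_t=(\delta^1_t,\dots,\delta^N_t)$; since each $\delta^n_t$ is i.i.d.\ $\mathcal{N}(0,\sigma^2 I)$, the concatenation is itself $\mathcal{N}(0,\sigma^2 I)$ on the product observation space, and $\|\epsilon_t\|_2$ in the radius formula is exactly the $l_2$-norm on that product space. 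Under this identification the smoothed policy of Definition~\ref{def:1} is precisely Cohen's smoothed classifier $g(s_t)=\mathop{\arg\max}_{\mathbf{a}}\mathbb{P}_{\Delta}(\pi(s_t+\Delta)=\mathbf{a})$, so it suffices to show that, on a high-probability event, the quantities $\underline{p_{\mathbf{a}^m}}$ and $\overline{p_{\mathbf{a}^r}}$ returned by the algorithm are valid surrogates for $\underline{p_A}$ and $\overline{p_B}$ in the hypothesis of Theorem~\ref{the:cohen} with $c_A=\mathbf{a}^m$.

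The second step handles the estimation error. The $M$ noise draws in \textproc{smoothing} are i.i.d., so $Counts(Actlist)$ is a single multinomial sample over the action sets with true cell probabilities $p_{\mathbf{a}}=\mathbb{P}_{\Delta}(\pi(s_t+\Delta)=\mathbf{a})$. By the Goodman/Chi-square simultaneous confidence construction used in \textproc{MultiConBnd}, there is an event $E$ with $\mathbb{P}(E)\ge 1-\alpha$ on which \emph{every} cell satisfies $p_{\mathbf{a}}\in[\underline{p_{\mathbf{a}}},\overline{p_{\mathbf{a}}}]$ simultaneously; in particular $p_{\mathbf{a}^m}\ge\underline{p_{\mathbf{a}^m}}$. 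It remains to argue that $\max_{\mathbf{a}\neq\mathbf{a}^m}p_{\mathbf{a}}\le\overline{p_{\mathbf{a}^r}}$ on $E$. Let $\mathbf{a}^\star=\mathop{\arg\max}_{\mathbf{a}\neq\mathbf{a}^m}p_{\mathbf{a}}$. If $\mathbf{a}^\star=\mathbf{a}^r$ this is immediate from $E$; otherwise $\mathbf{a}^\star$ received no more empirical counts than the runner-up $\mathbf{a}^r$, and since the Goodman upper bound is monotone non-decreasing in a cell's observed count (with $M$ fixed), $p_{\mathbf{a}^\star}\le\overline{p_{\mathbf{a}^\star}}\le\overline{p_{\mathbf{a}^r}}$. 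Hence on $E$ the chain $\mathbb{P}_\Delta(\pi(s_t+\Delta)=\mathbf{a}^m)\ge\underline{p_{\mathbf{a}^m}}$ and $\overline{p_{\mathbf{a}^r}}\ge\max_{\mathbf{a}\neq\mathbf{a}^m}\mathbb{P}_\Delta(\pi(s_t+\Delta)=\mathbf{a})$ holds, matching the hypothesis of Theorem~\ref{the:cohen}.

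The final step is to discharge the remaining precondition $\underline{p_{\mathbf{a}^m}}\ge\overline{p_{\mathbf{a}^r}}$ and conclude. Because $\Phi^{-1}$ is strictly increasing, $D\ge 0$ if and only if $\underline{p_{\mathbf{a}^m}}\ge\overline{p_{\mathbf{a}^r}}$; when this fails, $D<0$ and the claimed implication is vacuously true since no $\epsilon_t$ satisfies $\|\epsilon_t\|_2\le D$. When it holds, Theorem~\ref{the:cohen} applies verbatim and yields $g(s_t+\epsilon_t)=\mathbf{a}^m$, i.e.\ $\tilde{\pi}(s_t+\epsilon_t)=\mathbf{a}^m$, for all $\|\epsilon_t\|_2\le R=\tfrac{\sigma}{2}\bigl(\Phi^{-1}(\underline{p_{\mathbf{a}^m}})-\Phi^{-1}(\overline{p_{\mathbf{a}^r}})\bigr)=D$. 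Since this geometric guarantee holds on $E$ and $\mathbb{P}(E)\ge 1-\alpha$, the proposition follows. I expect the main obstacle to be the step bounding $\max_{\mathbf{a}\neq\mathbf{a}^m}p_{\mathbf{a}}$ by the single runner-up bound $\overline{p_{\mathbf{a}^r}}$ (the monotonicity-in-count argument above), together with cleanly separating the two independent sources of randomness — the Monte Carlo sampling that produces $E$ versus the noise $\Delta$ internal to $g$/$\tilde{\pi}$ — so that the confidence loss $\alpha$ is incurred exactly once.
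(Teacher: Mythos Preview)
Your proposal is correct and follows essentially the same approach as the paper: reduce to Theorem~\ref{the:cohen} by treating the joint action $\mathbf{a}$ as a class label and the concatenated noise as a single isotropic Gaussian, then use the Goodman simultaneous confidence bounds (\textproc{MultiConBnd}) to certify, with probability at least $1-\alpha$, that $\underline{p_{\mathbf{a}^m}}$ and $\overline{p_{\mathbf{a}^r}}$ satisfy the hypothesis of Theorem~\ref{the:cohen}. Your treatment of the runner-up bound via monotonicity of the Goodman upper interval in the observed count, and your careful separation of the Monte-Carlo randomness from the smoothing noise, are exactly the right ingredients and match what the paper's appendix proof needs.
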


Proof is provided in {\bf Appendix\footnote{All appendixes of this paper can be found at \url{https://github.com/TrustAI/CertifyCMARL/blob/main/appendix.pdf}} A}. The intuition behind the method shown in Algorithm \ref{alg:algorithm1} is similar to the certification procedure for classification through randomised smoothing \cite{cohen2019certified}. The \textproc{BioPValue} is applied to calculate the p-value of the two-sided hypothesis test to choose the action $\mathbf{a}^m$. However, rather than abstaining from the action when the p-value does not meet the confidence level, we set the certified radius of this step as $D=0$ to indicate that the certification failed, since the RL relies on decisions of multiple steps. When $n=1$, the algorithm can be used to certify RLs with a single agent as \citet{wu2021crop}, but instead of using the smoothed action value function $Q^{\pi}$, we utilise the frequency of occurrence of each action to determine which action to be selected. Since c-MARLs are trained under the premise that each agent would always select the best action, they do not reliably anticipate the team reward when some agents behave badly.

In the c-MARLs, there are some additional challenges that preclude us from using this intuitive certification criterion. 

\textbf{Challenge 1. The perturbation $D$ added to the observation of each agent can be different.}
For c-MARLs, each agent develops its own policy to choose its action. If the certified bound is calculated using Algorithm \ref{alg:algorithm1}, all agents will engage with the same perturbation bound, making the results less accurate for each agent. As one agent can be more robust than the other, the same perturbation added to the agents will lead to different performances, which provides the need to certify the robustness of each agent. Thus, we will first consider certifying the robustness for every agent and then estimating the robustness in each state for all agents. To reduce the computation cost, we can sample from the joint policy $\pi(s')$ instead of each agent's policy separately. To this end, we can change $\mathbf{a}^m, \mathbf{a}^r$ in Algorithm \ref{alg:algorithm1} (Line 9) to the two most likely actions $(a^{m,n}, a^{r,n})$ for each agent and then calculate the corresponding lower bound $\underline{p_{a^{m,n}}}$ on probability $\mathbb{P}(\tilde{\pi}^n(z^n):=a^{m,n})$ and upper bound $\overline{p_{a^{r,n}}}$ for choosing the ``runner up" action, $a^{r,n}$. The certified bound for each agent per state can be computed as:

\begin{corollary}\label{theo:radius}
(Certification for the actions of each agent in each state) In state $s$, given the joint smoothed policy $\tilde{\pi}(s) =\{\tilde{\pi}^1(z^1),...,\tilde{\pi}^N(z^N)\}$, we can obtain the certified bound in state $s$ for each agent to guarantee $\tilde{\pi}^n(z^n+\epsilon):=a^{m,n}, \forall ||\epsilon ||_2 \leq d_n$:
 \begin{equation}
     d_n=\frac{\sigma}{2}(\Phi^{-1}(\underline{p_{a^{m,n}}})-\Phi^{-1}(\overline{p_{a^{r,n}}}))
 \end{equation}
\end{corollary}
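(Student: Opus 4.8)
The plan is to reduce the claim to a per-agent application of Theorem~\ref{the:cohen}. Fix the state $s$ and an agent $n$. Since the trained network induces a deterministic greedy policy, the map $z^n \mapsto \pi^n(z^n)$ can be viewed as a classifier $f_n$ from the observation space into the finite action set $\mathcal{A}$. Because the noise vector $\Delta_t=(\delta^1_t,\dots,\delta^N_t)$ is i.i.d.\ $\mathcal{N}(0,\sigma^2 I)$, the marginal perturbation applied to agent $n$'s observation is itself $\mathcal{N}(0,\sigma^2 I)$, so the per-agent smoothed policy $\tilde{\pi}^n(z^n)=\mathop{\arg\max}_{a^n}\mathbb{P}\big(\pi^n(z^n+\delta^n)=a^n\big)$ is exactly the randomised-smoothing classifier $g$ of Theorem~\ref{the:cohen} built from $f_n$.

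Next I would connect the estimated quantities to the hypotheses of Theorem~\ref{the:cohen}. Projecting the $M$ Monte Carlo samples of $\pi(s')$ onto agent $n$'s coordinate yields $M$ i.i.d.\ draws of $\pi^n(z^n+\delta^n)$; feeding their per-action counts into \textproc{MultiConBnd} (Goodman's simultaneous confidence intervals) gives, with probability at least $1-\alpha$ and \emph{simultaneously over all actions}, a valid lower confidence bound $\underline{p_{a^{m,n}}}$ on $\mathbb{P}(\pi^n(z^n+\delta^n)=a^{m,n})$ together with valid upper confidence bounds on every other action's probability. By simultaneity, the upper bound $\overline{p_{a^{r,n}}}$ attached to the observed runner-up dominates the upper bound of every non-top action, hence on this $(1-\alpha)$ event $\overline{p_{a^{r,n}}}\ge\max_{a\neq a^{m,n}}\mathbb{P}(\pi^n(z^n+\delta^n)=a)$. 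Combined with $\mathbb{P}(\pi^n(z^n+\delta^n)=a^{m,n})\ge\underline{p_{a^{m,n}}}$ and noting that the statement is non-vacuous only in the regime $\underline{p_{a^{m,n}}}\ge\overline{p_{a^{r,n}}}$ (otherwise $d_n\le 0$ and the claim holds trivially), this reproduces the hypothesis of Theorem~\ref{the:cohen}.

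It then remains to invoke Theorem~\ref{the:cohen} with $c_A=a^{m,n}$, $\underline{p_A}=\underline{p_{a^{m,n}}}$ and $\overline{p_B}=\overline{p_{a^{r,n}}}$: it yields $\tilde{\pi}^n(z^n+\epsilon)=a^{m,n}$ for every $\|\epsilon\|_2\le R$ with $R=\frac{\sigma}{2}\big(\Phi^{-1}(\underline{p_{a^{m,n}}})-\Phi^{-1}(\overline{p_{a^{r,n}}})\big)$, which is precisely $d_n$; this holds on the $(1-\alpha)$-probability event identified in the previous step.

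I expect the main obstacle to be the simultaneity argument: one must ensure that a single upper confidence bound read off for the (empirically) runner-up action genuinely upper-bounds the \emph{maximum} over the true probabilities of all other actions. This is exactly why Goodman's simultaneous intervals (rather than per-action Clopper--Pearson bounds) are used, and why the $1-\alpha$ budget is shared across the whole count vector rather than split per action; reconciling ``empirical runner-up'' with ``true runner-up'' is absorbed into the same simultaneous coverage guarantee.
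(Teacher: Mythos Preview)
Your proposal is correct and follows the same route the paper takes: view each agent's greedy map $z^n\mapsto\pi^n(z^n)$ as a base classifier, observe that the $n$-th coordinate of the i.i.d.\ Gaussian noise is itself $\mathcal{N}(0,\sigma^2 I)$, and then invoke Theorem~\ref{the:cohen} with $c_A=a^{m,n}$ to read off the radius $d_n$. Your added discussion of how \textproc{MultiConBnd} and the monotonicity of Goodman's simultaneous bounds justify treating the empirical runner-up's upper bound as a valid $\overline{p_B}$ is more detail than the bare corollary (which is stated conditionally on valid $\underline{p_{a^{m,n}}},\overline{p_{a^{r,n}}}$) strictly requires, but it is correct and useful context.
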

Proof is provided in {\bf Appendix A}. Finally, the most likely chosen action for each agent can be combined as the final action set $\{{a}^{m,1},{a}^{m,2},...,a^{m,N}\}$ and the certified bound at each step can be defined as: 
 \begin{definition}\label{def:2}
Given the certified bound obtained for each agent in state $s$, $\{d_1,d_2,...,d_N\}$, the certified bound in this state for all agents is determined by the least robust agent: $D = min\{d_1,d_2,...,d_N\}$.
\end{definition}

\textbf{Challenge 2. If we choose the bound of the least robust agent as the bound for all agents per state, the confidence level decays.}
As Proposition 1 indicates, on each call of certification, the certified robustness bound obtained only holds with confidence level $(1-\alpha)$. As we sample noise from the Gaussian distribution \textit{independently}, the hypothesis tests are \textit{independent}. Based on Definition \ref{def:2}, to calculate the certified bound for each state, we have the following constraint for the probability of making an error:
$$\begin{array}{l}\mathbb{P}(\bigvee_{n \in N}, n \text {-th agent's cert failed }) \leq \\ \min \left(\sum_{n} \mathbb{P}(n \text {-th agent's cert failed }), 1\right)=\min (N \alpha, 1)\end{array}
$$
Therefore, for multiple tests, without any control on the error, the probability of making an error will increase with the number of tests. Suppose that there are $T$ steps in the entire trajectory, we will have $N*T$ tests in total, which can be a great challenge. To address this problem, for certifying per-state actions, the confidence level can be reduced to $\alpha / N$. Additionally, we can first perform agent selection to control the selective error by considering the importance of each agent, since sometimes an agent changing its action will not diminish the team reward.
Moreover, to evaluate the global certification bound, we propose a tree-search-based method to find the lower bound of the team reward. In Section \ref{methodology} and \ref{globalreward}, we will detail our proposal to certify the robustness of per-state actions and global reward.

\section{Robustness Certification for Per-State Action with Correction}\label{methodology}
\subsection{Multiple Hypothesis Testing }
\begin{corollary}\label{cor:bound}
(Certified bound per state) In state $s$, given N agents with action $\mathbf{a}$, the joint policy is ${\pi}(s) =\{{\pi}^1(z^1),...,{\pi}^N(z^N)\}$. Suppose that the observation of each agent is perturbed by random noise $\delta^n$, where $\delta^n \sim \mathcal{N}(0, \sigma^2 I)$. $\forall n \in N$, if $\mathbb{P}({\pi}^n(z^n+\delta^n):=a^{m,n})\geq 0.5$, we can compute the certified bound by \textbf{Definition \ref{def:2}}.
\end{corollary}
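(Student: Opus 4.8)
The plan is to reduce this statement to Corollary \ref{theo:radius} (the per-agent certified bound) applied agentwise, together with a union bound over the $N$ per-agent hypothesis tests to handle the confidence degradation flagged in Challenge 2. First I would fix a state $s$ and, for each agent $n$, invoke the single-agent randomised-smoothing guarantee: since $\delta^n \sim \mathcal{N}(0,\sigma^2 I)$ and $\mathbb{P}(\pi^n(z^n+\delta^n) = a^{m,n}) \geq 0.5 \geq \max_{a \neq a^{m,n}} \mathbb{P}(\pi^n(z^n+\delta^n)=a)$, Theorem \ref{the:cohen} (applied to the base policy $\pi^n$ viewed as a classifier from observations to actions) yields that the smoothed per-agent policy $\tilde\pi^n$ returns $a^{m,n}$ on every input within $l_2$-radius $d_n = \frac{\sigma}{2}(\Phi^{-1}(\underline{p_{a^{m,n}}}) - \Phi^{-1}(\overline{p_{a^{r,n}}}))$ of $z^n$, where $\underline{p_{a^{m,n}}}$ and $\overline{p_{a^{r,n}}}$ are the Monte-Carlo lower/upper confidence bounds computed by \textproc{MultiConBnd}. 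That is exactly the content of Corollary \ref{theo:radius}, so this step is essentially a citation rather than new work.

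Next I would assemble the joint statement. Set $D = \min\{d_1,\dots,d_N\}$ as in Definition \ref{def:2}. For any perturbation $\epsilon = (\epsilon^1,\dots,\epsilon^N)$ with $\|\epsilon\|_2 \leq D$ we have $\|\epsilon^n\|_2 \leq \|\epsilon\|_2 \leq D \leq d_n$ for every $n$, so each agent's action is individually unchanged; hence the joint smoothed policy $\tilde\pi(s+\epsilon)$ returns the action set $\{a^{m,1},\dots,a^{m,N}\}$. This is the deterministic geometric core of the argument and is routine.

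The step that actually needs care — and the one I would treat as the main obstacle — is the probabilistic bookkeeping. Each per-agent bound $d_n$ is only valid with confidence $(1-\alpha)$ because it is derived from finitely many Monte-Carlo samples, and the excerpt already notes that the event ``some agent's certificate fails'' has probability at most $\min(N\alpha, 1)$. So to get a clean statement I would either (i) run each agent's test at level $\alpha/N$, so that a union bound over the $N$ tests gives overall confidence at least $1-\alpha$ for the joint conclusion $\tilde\pi(s+\epsilon) = \{a^{m,1},\dots,a^{m,N}\}$ for all $\|\epsilon\|_2 \leq D$; or (ii) if the corollary is meant to state the bound without the Bonferroni correction, carefully phrase the conclusion as holding with confidence $1 - N\alpha$ (equivalently $\min(N\alpha,1)$ on the error side). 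The independence of the noise draws $\delta^1,\dots,\delta^N$ is what makes the union bound tight but is not strictly needed for it; what matters is just subadditivity of probability over the $N$ failure events. I would also double-check the degenerate cases — ties for the top action, or $\underline{p_{a^{m,n}}} = \overline{p_{a^{r,n}}}$ forcing $d_n = 0$ and hence $D = 0$ — which make the statement vacuous but still true, matching the ``certification failed'' convention set earlier in the paper.
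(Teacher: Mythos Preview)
Your core argument is correct and matches the paper's approach: the hypothesis $\mathbb{P}(\pi^n(z^n+\delta^n)=a^{m,n})\geq 0.5$ forces $a^{m,n}$ to be the majority action (since all other actions then have probability $\leq 0.5$), so Theorem~\ref{the:cohen}/Corollary~\ref{theo:radius} applies agentwise to yield $d_n$, and taking $D=\min_n d_n$ as in Definition~\ref{def:2} certifies the joint action. The one point where you over-elaborate is the probabilistic bookkeeping: Corollary~\ref{cor:bound} is stated conditionally on the \emph{true} probabilities satisfying the $\geq 0.5$ condition, so no union bound or Bonferroni correction belongs in its proof; the paper defers the finite-sample verification of that condition (and the associated multiple-testing correction) to the subsequent BH/FDR machinery rather than folding it into this corollary.
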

Proof is presented in {\bf Appendix B}. In order to obtain the certified bound for all agents per state, we can employ Corollary \ref{cor:bound}, and, as suggested, for each agent, we need to ensure that condition $\mathbb{P}(\tilde{\pi}^n(z^n):=a^{m,n})\geq 0.5$ is satisfied. Hence, after sampling, with the count ($ct^n_1$) for the most frequent action taken by agent $n$, we can implement the one-sided binomial test to obtain its p-value $pv_n$. These p-values can be processed to indicate which tests should be accepted under $(1-\alpha)$ confidence.

\begin{definition}\label{def:4}
(Hypothesis Test) The hypothesis test with null hypothesis for each agent is $H_0: \mathbb{P}(\tilde{\pi}^n(z^n):=a^{m,n}) < 0.5$, and the alternative is $H_1: \mathbb{P}(\tilde{\pi}^n(z^n):=a^{m,n}) \geq 0.5$ 
\end{definition}
In the hypothesis test, if the null hypothesis $H_0$ is true, we can determine the p-value, which is the probability of finding a statistic that is equally extreme as the observed one or more extremes. Given the statistical test in \textbf{Definition \ref{def:4}}, if the p-value is below the confidence level, we can reject the null hypothesis, which means that the bound is certified; otherwise, we accept it.   

In multiple hypothesis tests, the probability of the occurrence of false positives (FP) will increase, where the FP denotes that we reject the null hypothesis when it is true, which is also called {\it type I error}. Suppose that the confidence level is $\alpha$, the probability of FP is expected to be less than $\alpha$. To control {\it type I error} for multiple tests with $H$ tests, the family-wise error rate (FWER) is introduced, which changes $\alpha$ for each test to $\alpha/H$. However, it is still conservative, which can increase the true negative rate (i.e., {\it type II error}). 


To solve this problem, \citet{benjamini1995controlling} proposed the false discovery rate (FDR) to find the expected false positive portion. The FDR method applies a corrected p-value for each test case, achieving a better result: testing for as many positive results as possible while keeping the false discovery rate within an acceptable range. The Benjamini-Hochberg (BH) procedure first sorts the p-values of tests in ascending order and then finds the largest $k$ such that $p_k \leq k \alpha/H$, rejecting null if the p-value is below $p_k$. \citet{Fit2014} then proposed selective hypothesis tests by applying inference to the selected model to control the selective {\it type I error}, which controls the global error as $\frac{\mathbb{E} [\# False Rejections]}{\mathbb{E} [\# H_0 Selected]} \leq \alpha$.
Inspired by the selective hypothesis tests, we propose to multiply every agent's importance factor with its p-value to control the selective FDR via
executing the BH procedure on the corrected p-values.

\begin{algorithm}[tb]
\caption{Certified Robustness Bound of the Perturbation for Actions of Each State with Correction (CRSC)}
\label{alg:algorithm2}
\textbf{Input}: Trained $Q^{\pi}$; $N$ agents; \\
\textbf{Parameter}: sampling size $M$; Gaussian distribution parameter $\sigma$;  confidence parameter $\alpha$   
\begin{algorithmic}[1] 
\State $Actlists \gets \textproc{smoothing}(M, Q^{\pi},{\alpha}, {\sigma})$
\State $a^{m,n}, a^{r,n},ct^n_1, ct^n_2  \gets Counts(Actlists[n])$ for $n \in N$
\State $IF \gets IF\_function(Q^{\pi},Actlists)$ 
\Statex \Comment {Obtain importance factor for agent}
\State $pv_n \gets BioPVALUE(ct^n_1,M,0.5)$ for $n \in N$
\State $c_n \gets BHproc((pv_n*IF[n]),\alpha)$ for $n \in N$
\State If $\neg c_{n}: d_{n} \leftarrow 0$ \Comment {Remove failed agent}
\State $\mathcal{I}_{\boldsymbol{cert}}:=\left\{n \mid {d}_{n} \neq 0 \right\}$ \Comment {Obtain certified agent set}
\State Compute $d_n$ for each agent in $\mathcal{I}_{\boldsymbol{cert}}$
\State $D =min(d_n|n\in\mathcal{I}_{\boldsymbol{cert}})$ 
\State \textbf{return} D, $\mathcal{I}_{\boldsymbol{cert}}$ 
\end{algorithmic}
\end{algorithm}
\begin{algorithm}[t]
\caption{Tree-Search-based certified robustness bound and global reward (T-CRGR)}
\label{alg:algorithm3}
\textbf{Input}: Trained $Q^{\pi}$; $N$ agents; confidence parameter $\alpha$\\
\textbf{Parameter}: sampling times $M$; Gaussian distribution parameter $\sigma$ \\
\begin{algorithmic}[1]
\Function{GetNode}{$s$}
\State $Actlists \gets \textproc{smoothing}(M, Q^{\pi},{\alpha}, {\sigma})$
\State $IF \gets IF\_function(Q,Actlists) $ 
\State $A\_dic,d\_list \gets \emptyset$
\For{$ n \in \mathcal{I}_{\boldsymbol{agent}}$}
    \State $a^{m,n}, a^{r,n},ct^n_1, ct^n_2  \gets$ $Counts(Actlists)$
    \State $pv_n \gets BioPVALUE(ct^n_1,M,0.5)$
    \If {$pv_n*IF[n] > \alpha$}
    \State $A\_dic[n] \gets A\_dic[n]\cup\{a^{m,n}, a^{r,n}\}$
    \State $ \underline{p_1}\gets BioConBnd(ct^n_1+ct^n_2,M,1-\alpha)$
    \Else 
    \State $A\_dic[n] \gets A\_dic[n]\cup\{a^{m,n}\}$
    \State $ \underline{p_1}\gets BioConBnd(ct^n_1,M,1-\alpha)$
    \EndIf
    \State $d\_list \gets d\_list \cup (\sigma\Phi^{-1}\left({\underline{p_1}}\right))$
\EndFor
\State $d \gets min(d\_list)$
\State \textbf{return} $A\_dic,d$
\EndFunction
\Function{Search}{$d,s,\mathbf{a},R,done$}:
\If {$R\geq R_{min}$} \Comment{{Prune the tree}}
\State \textbf{return} 0
\EndIf
\If {$done$}
\State $R_{min} \gets min(R,R_{min})$
\State \textbf{return} 0
\EndIf
\State $A\_dic,d_{new} \gets \textproc{GetNode}(s)$
\State $d\gets min(d_{new},d), Action\_list \gets A\_dic$
\For {$\mathbf{a}$ in $Action\_list$}
\State $s_{new},done \gets env.step(\mathbf{a},s) $
\State $\textproc{Search}(d,s_{new},\mathbf{a},R+Q(s,\mathbf{a}),done)$
\EndFor
\EndFunction
\end{algorithmic}
\end{algorithm}
\subsection{Measuring the Importance of Agents}

To obtain each agent's important factor, we can measure each agent's contribution to the team reward at each state. We adapt the advantage function proposed in COMA \cite{coma}, which is used to decentralise agents by estimating the individual reward during training. As the importance factor defined in \textbf{Definition} \ref{def:5}, it is applied to examine the behaviour of the current action of the agent.   
\begin{definition}\label{def:5}
For each agent $n$, the importance factor $IF^n$ of each agent is computed by comparing the Q value of the current action $a^n$ with the counterfactual reward baseline, which is obtained by altering the action of agent $n$, $a^{n'}$, and keeping the other agents’ actions $\mathbf{a}^{-n}$ unchanged:
\small{
$$IF^{n}(s, \mathbf{a})=Q(s, \mathbf{a})-\sum_{a^{n'} \in \mathcal{A}} \mathbb{P}(\tilde{\pi}^{n}(s):=a^{n'}) \cdot Q\left(s,\left(\mathbf{a}^{-n}, a^{n'}\right)\right)$$}
\end{definition}
Algorithm \ref{alg:algorithm2} shows the process for certifying the robustness of the actions of each state while controlling the error. To correct the p-value in the multiple tests, we adapt the p-value for each test by multiplying it with the agent's importance factor (Line 4). Then we can perform the BH procedure (Line 5) to determine which tests should be rejected. Lastly, we obtain the set of certified agents $\mathcal{I}_{\boldsymbol{cert}}$ with certified bounds.

\begin{theorem}
 For each agent in $\mathcal{I}_{\boldsymbol{cert}}:=\left\{n \mid {d}_{n} \neq 0 \right\}$, the action can be certified as
 $\tilde{\pi}^n(z^n+\epsilon^n) = \tilde{\pi}^n(z^n)$, where $\|\epsilon^n\|_{2} \leq D:=min(d_n),\forall n \in \mathcal{I}_{\boldsymbol{cert}}.$
\end{theorem}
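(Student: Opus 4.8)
The plan is to combine the per-agent certified radius from Corollary~\ref{theo:radius} with the selective FDR control guarantee of the BH procedure applied to the importance-corrected p-values. First I would fix the event structure: for each agent $n$, let $E_n$ be the event that the one-sided binomial test based on $ct^n_1$ correctly rejects $H_0$ (i.e., that $\mathbb{P}(\tilde\pi^n(z^n):=a^{m,n})\ge 0.5$ genuinely holds and the Clopper--Pearson / binomial confidence interval used in \textproc{MultiConBnd} covers the true probabilities). Conditioned on $E_n$ holding for every $n\in\mathcal{I}_{\boldsymbol{cert}}$, Corollary~\ref{theo:radius} gives $\tilde\pi^n(z^n+\epsilon^n)=a^{m,n}=\tilde\pi^n(z^n)$ for all $\|\epsilon^n\|_2\le d_n$, and since $D=\min_{n\in\mathcal{I}_{\boldsymbol{cert}}} d_n\le d_n$, the same conclusion holds for all $\|\epsilon^n\|_2\le D$. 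So the whole statement reduces to bounding the probability that some selected agent's certification is spurious.

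Next I would invoke the BH / selective-inference machinery. The key point is that the noise vectors $\delta^n$ are drawn i.i.d., so the per-agent hypothesis tests are independent; multiplying each p-value $pv_n$ by the importance factor $IF[n]$ and running the BH procedure at level $\alpha$ on the corrected values $\{pv_n\cdot IF[n]\}$ controls the selective false discovery rate, i.e.
\begin{equation}
\frac{\mathbb{E}\,[\#\{n\in\mathcal{I}_{\boldsymbol{cert}}: H_0^n \text{ true}\}]}{\mathbb{E}\,[\,|\mathcal{I}_{\boldsymbol{cert}}|\,]}\le \alpha .
\end{equation}
From this I would derive that, with probability at least $1-\alpha$, every agent in $\mathcal{I}_{\boldsymbol{cert}}$ has its alternative hypothesis true and its confidence interval valid, so that the radii $d_n$ returned by the algorithm are all simultaneously correct. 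Here the importance-factor correction is what lets us tolerate agents whose action changes are irrelevant to the team reward without inflating the error — it reweights the tests rather than naively splitting $\alpha$ by $N$, which was the $\min(N\alpha,1)$ bound flagged as too loose in the discussion of Challenge~2.

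The main obstacle I expect is making the move from ``FDR controlled in expectation'' to a ``with probability $\ge 1-\alpha$ the certified set is entirely valid'' statement rigorous: FDR is an expectation over the ratio of false to total discoveries, not a direct high-probability bound on the union of bad events. I would handle this either (i) by appealing to the union-bound form $\mathbb{P}(\exists\, n\in\mathcal{I}_{\boldsymbol{cert}}: \text{cert failed})\le \alpha$ that holds when the BH step is replaced/interpreted conservatively as a per-family level-$\alpha$ procedure (which is what the $\alpha/N$ remark and the $\min(N\alpha,1)$ inequality in the text suggest the authors actually use as the safe fallback), or (ii) by stating the guarantee in the weaker FDR sense and being explicit that the ``$D$'' bound holds jointly for the certified agents up to the controlled selective error. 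A secondary, more routine obstacle is bookkeeping the two sources of Monte Carlo error — the binomial test for the $\ge 0.5$ condition and the simultaneous confidence bounds $\underline{p_{a^{m,n}}},\overline{p_{a^{r,n}}}$ feeding the radius formula — but both are already absorbed into the single confidence budget $\alpha$ by the earlier propositions, so I would simply cite Proposition~1 and Corollary~\ref{theo:radius} rather than re-derive them.
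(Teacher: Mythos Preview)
Your proposal is far more elaborate than the paper's actual proof, which is a two-sentence sketch: it simply says that for each agent independently, once the condition $\mathbb{P}(\tilde\pi^n(z^n):=a^{m,n})>0.5$ holds, one computes $\underline{p_{a^{m,n}}}$, $\overline{p_{a^{r,n}}}$ and hence $d_n$ via Corollary~\ref{theo:radius}, and then the minimum $d_n$ over the certified set gives the common bound. That is the entire argument in the paper --- it does not invoke the BH procedure, the importance factors, or any FDR bookkeeping inside the proof of this theorem.

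So your core step (apply Corollary~\ref{theo:radius} per agent, then take $\min$) matches the paper exactly, but everything you layer on top --- the event decomposition, the selective-FDR inequality, and especially the discussion of how to pass from an expectation-level FDR guarantee to a high-probability statement --- is your own addition. You are right that this passage is delicate and that controlling $\mathbb{E}[\#\text{false}]/\mathbb{E}[\#\text{selected}]\le\alpha$ does not by itself yield $\mathbb{P}(\exists\,n\in\mathcal{I}_{\boldsymbol{cert}}:\text{cert failed})\le\alpha$; the paper simply does not confront this issue in the proof. In that sense your proposal is not wrong, it is more careful than the source, and the ``obstacle'' you flag is a genuine gap that the paper leaves implicit rather than something you need to overcome to reproduce their argument.
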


\begin{proof}
Considering each agent independently, given that agent $n$ updates its policy $\tilde{\pi}^n(z^n)$ in each state, under the condition $\mathbb{P}(\tilde{\pi}^n(z^n):=a^{m,n})> 0.5$, we can obtain the lower bound probability of selecting the $a^{m,n}$ and the upper-bound probability for the ``runner-up" action, $a^{r,n}$, for each agent and then compute the certified bound $d_n$. The minimum certified bound holds for any agent that satisfies the condition, denoted by the set $\mathcal{I}_{\boldsymbol{cert}}$.
\end{proof}

\section{Robustness Guarantee on Global Reward}\label{globalreward}

To certify the bound of global reward under the certified perturbation bound for each step, the CRSC is no longer applicable, as it cannot find the lower bound of global reward. Therefore, we propose a tree-search-based method to find the global lower bound of the team reward under the certified bound of perturbation. 

The insight of implementing the search tree is that, if we cannot certify the bound of perturbation at some time steps for some agent, we can take the second most frequent action, which will result in a new trajectory. Then we can explore the new trajectory by developing it as an expanded branch of the search tree, which may result in a lower global reward. Thus, the minimum reward can be determined as the certified lower bound of the global reward after exploring all trajectories. The main function for the tree-search-based method is presented in Algorithm \ref{alg:algorithm3}. As it shows, at first, we figure out all possible actions to formulate the action list to be explored using the function \textproc{GetNode}. Then we perform the \textproc{Search} function to expand the tree based on each action node. Once all new trajectories have been explored, we obtain the certified bound of perturbation and the minimum reward among all leaf nodes. We also apply pruning to control the size of the search tree, which requires the reward in the environments to be non-negative. When the cumulative reward of the current node has already reached the lower bound, it can be pruned, as the subsequent tree will not lead to a lower bound.


\section{Experiments}

We first present the certified lower bound of the global reward under the certified perturbation, then we show the certified robustness for actions per state. Moreover, since our method is general and can be applied in single-agent systems, we will show the comparison experiments with the state-of-the-art baseline on certifying the global reward for RL with a single agent. 

\textbf{Baseline} In single agent environments, we compare our method with the state-of-the-art RL certification algorithm, CROP-LORE \cite{wu2021crop}. CROP-LORE is based on local policy smoothing that has a similar goal to our work -- obtaining a lower bound of the global reward under the certified bound for the actions of each state. Since CROP-LORE also employs the tree-search-based algorithm, we follow the same setting for a fair comparison. For certifying the c-MARLs, since there is no existing solution, we apply the PGD attack \cite{PGD} to demonstrate the validity of the certified bounds. 

\textbf{Environments} For the single agent environment , we use the ``Freeway" in OpenAI Gym \cite{openai}, which is the most stable game reported in the baseline. To demonstrate the performance of our method on c-MARLs, we choose two environments ``Checkers" with {\it two} agents and ``Switches" with {\it four} agents from ma-gym \cite{magym}. Details of the environments can be found in {\bf Appendix D}. Extra experiments in the environment -- ``Traffic Junction'' with four and ten agents can be found in {\bf Appendix E}.

\textbf{RL Algorithms}
We apply our method to certify the DQN trained by SA-MDP (PGD) and SA-MDP (CVX) \cite{zhang2020robust} in the single-agent setting since they have been empirically shown to achieve the highest certified robustness among all the baselines examined. 
For c-MARLs, we use VDN \cite{vdn} and QMIX \cite{qmix}, which are well-established value-based algorithms.

\textbf{Experiments setup} For all experiments, we sample noise 10,000 times for smoothing and set the discount factor $\gamma$ to 1.0. In the single-agent environment, we follow the same setting as the baseline, where the time step is 200 and the confidence level is $\alpha=0.05$. For c-MARLs, $\alpha=0.01$.
\begin{figure*}[htb]
\centering
\begin{subfigure}{1\linewidth}
    \centering
    \includegraphics[width=0.255\linewidth]{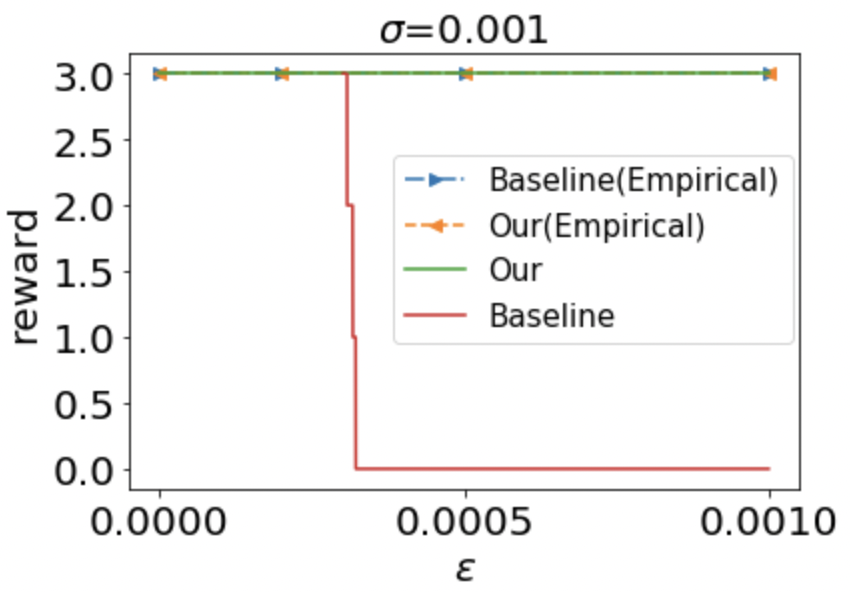}
    \includegraphics[width=0.24\linewidth]{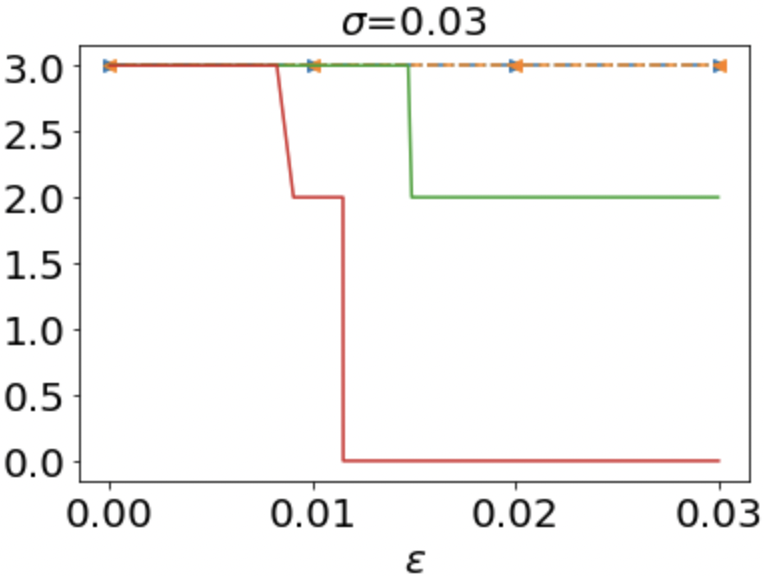}
    \includegraphics[width=0.24\linewidth]{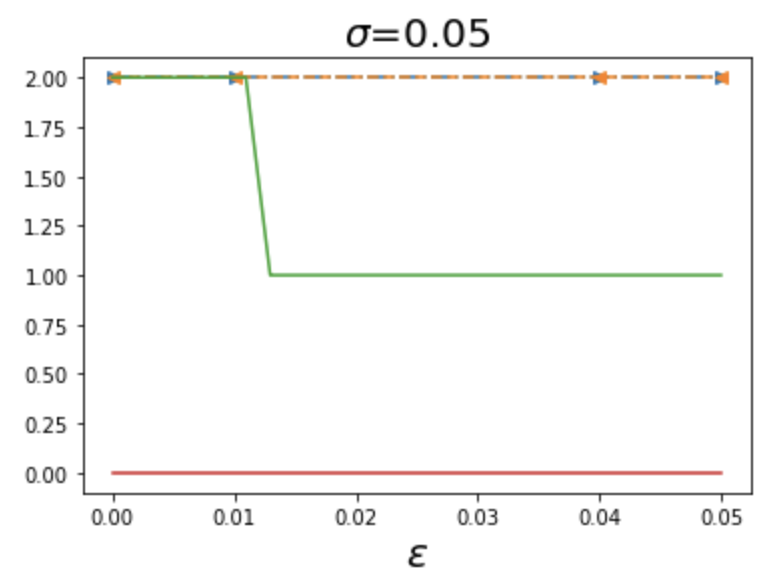}
    \includegraphics[width=0.24\linewidth]{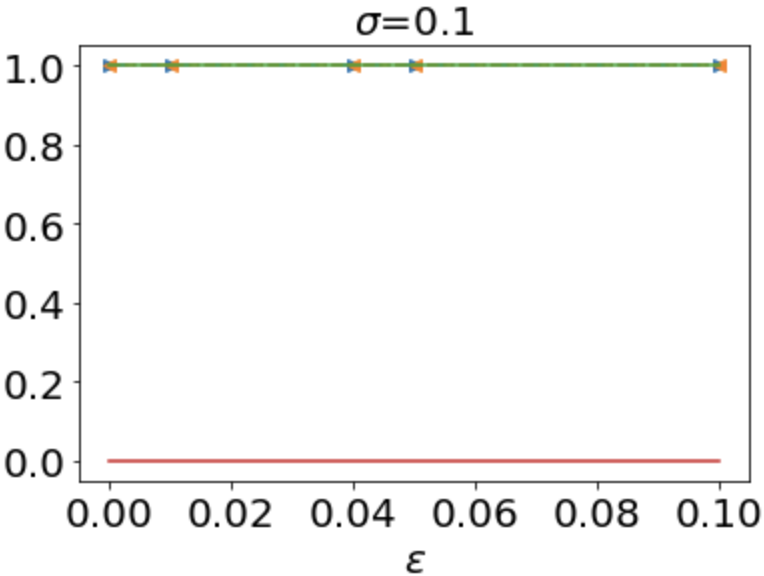}
    \vspace{-2mm}
    \caption{SA-MDP(CVX)}
     \label{fig:bs1}
 \end{subfigure}
 \begin{subfigure}{1\linewidth}
    \centering
    \includegraphics[width=0.255\linewidth]{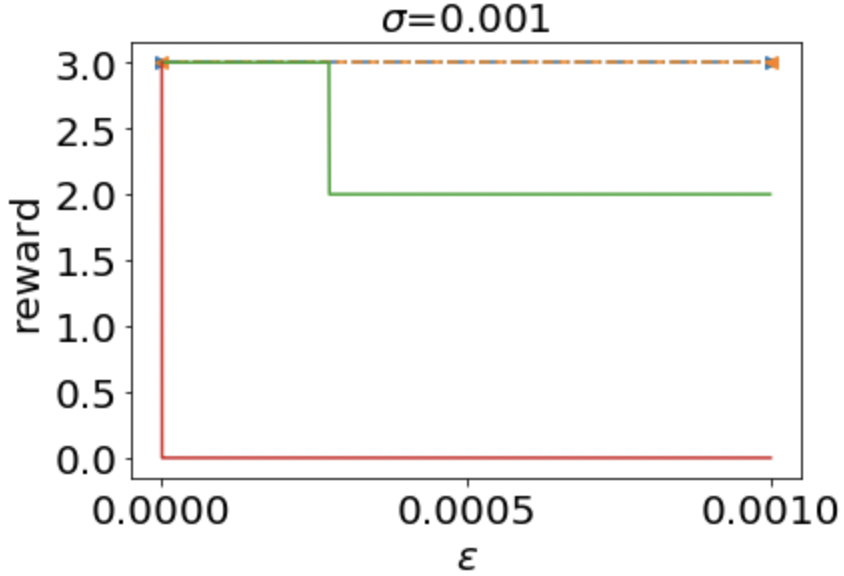}
    \includegraphics[width=0.24\linewidth]{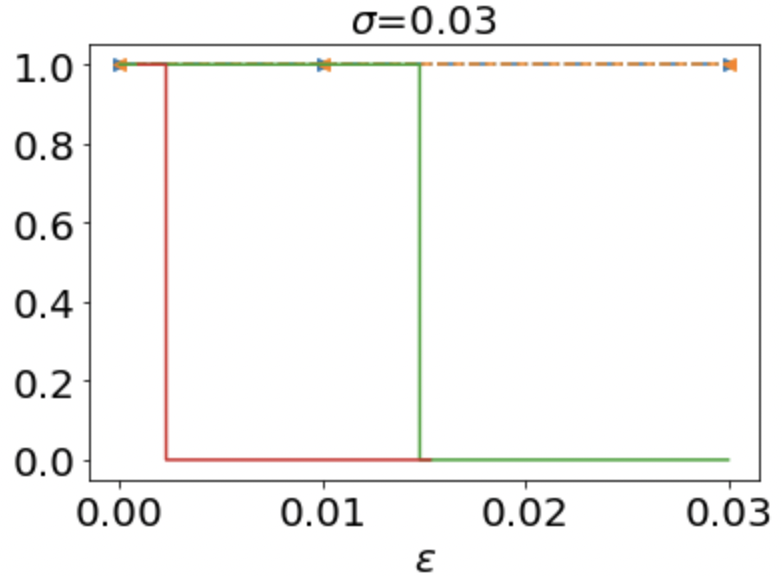}
    \includegraphics[width=0.24\linewidth]{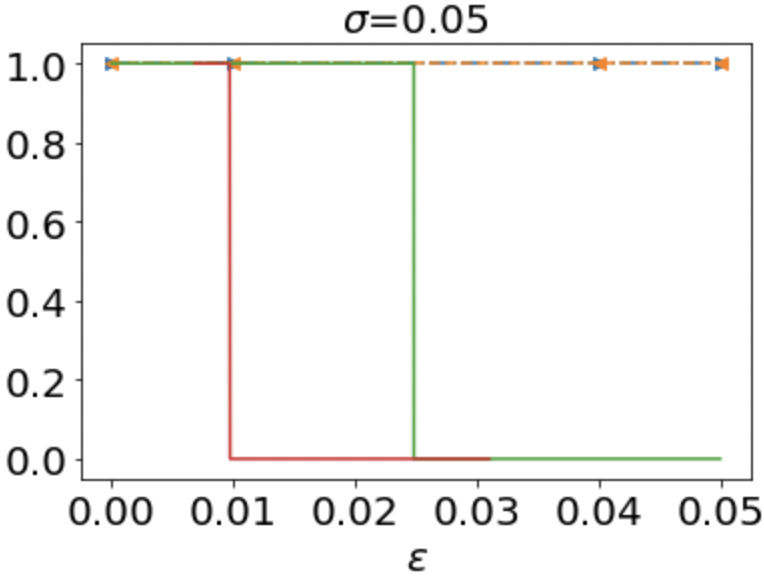}
    \includegraphics[width=0.24\linewidth]{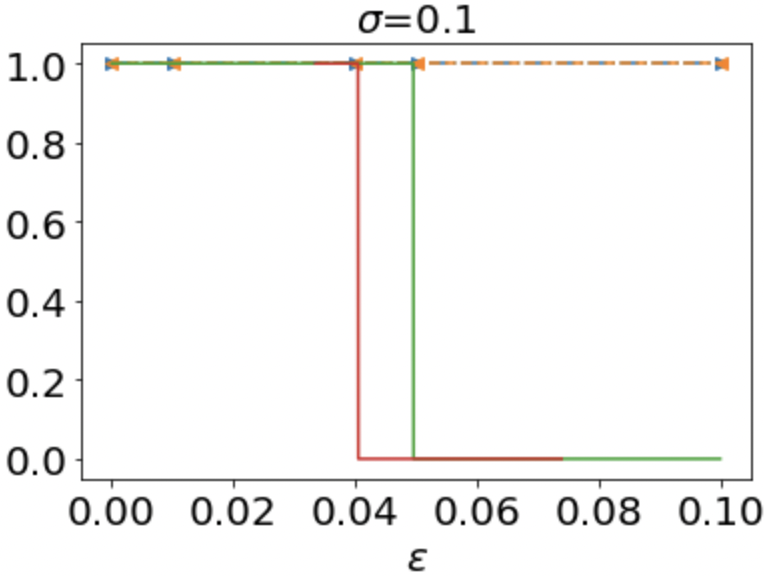}
        \vspace{-2mm}
    \caption{SA-MDP(PGD)}
     \label{fig:bs2}
     \vspace{-3mm}
 \end{subfigure}
\caption{Comparing the robustness certification of the total reward for SA-MDP in Freeway with \citet{wu2021crop}. Solid lines are the certified lower bounds of reward, and dashed lines indicate the empirical results under PGD attack.} 
\label{Fig:agentsbase}
\vspace{-3mm}
\end{figure*}
\begin{table*}[htb]
\begin{center}
\scalebox{0.8}{
\begin{tabular}{|c|c|c|c|c|c|c|c|c|c|c|c|}
\hline
{Models}&Game &No.agent&\multicolumn{3}{c|}{$\sigma=0.03$}& \multicolumn{3}{c|}{$\sigma=0.06$}& \multicolumn{3}{c|}{$\sigma=0.1$}   \\
\hline
 &&& $\epsilon_{cert}$& \multicolumn{2}{c|}{Reward}& $\epsilon_{cert}$& \multicolumn{2}{c|}{Reward}&$\epsilon_{cert}$& \multicolumn{2}{c|}{Reward}\\
 \hline
 &&&& Our&PGD& & Our&PGD&&Our&PGD\\
\hline
VDN&Checkers&2&0.0117 &79.84&79.84&0.0221 &79.84 &79.84 &0.0309 & 79.84&79.84  \\
\hline
QMIX&Checkers&2&0.0144 &19.96&19.96 &0.0369&19.96&19.96&0.0384 &19.96&19.96\\
\hline
VDN&Switch&4&0.0147 & 19.4&19.4 &0.284&14.4&19.4& 0.036&14&14.4 \\
\hline
QMIX *&Switch&4& 0.0173&-20&-20&0.0233&-20&-20 &0.038 &-20&-20  \\
\hline
\end{tabular}
}
\caption{Lower bound of global reward under the minimum certified bound of perturbation $\epsilon$, where the line with `*' denotes that we run the trajectory to the end without pruning to obtain the certified reward. }
\label{label:globalreward}
\end{center}
\vspace{-3mm}
\end{table*}

\subsection{Evaluate the robustness of the global reward}

\textbf{Compared with baseline on single agent} The baseline develops the smoothed policy based on the action-value function bounded by Lipschitz continuous, while our method is based on the probability of selecting the most frequent action. To make a fair comparison, we employ the same search tree structure as the baseline, which organises all possible trajectories and grows them by increasing the certified bound to choose an alternative action. The technical details are given in {\bf Appendix C}. 

As shown in Figure \ref{Fig:agentsbase}, our method obtains a tighter bound than the baseline. Since we measure the probability of selecting actions instead of the action value function to calculate the bound and choose action, we do not include the actions that have never been chosen in the possible action list, leading to a more reasonable action selection mechanism, resulting in a tighter calculated bound. Moreover, the Lipschitz continuity is used to compute the upper bound of the smoothed value function in the baseline, which is less tight than our bound based on high-probability guarantees. 

\textbf{Lower bound of global reward for c-MARLs} In Table \ref{label:globalreward}, we show the results of the lower bound of global reward under the minimum certified bound of perturbation $\epsilon_{cert}$. To perform pruning, the per-step reward in each environment are set to be non-negative. However, as the global reward obtained for QMIX on Switch are below zero, for this case, we run each trajectory to the end without pruning to calculate the global reward. We can see that VDN obtains higher reward compared to QMIX but is less robust (has lower $\epsilon_{cert}$). This is because during the training process, VDN simply adds rewards obtained by the two agents to achieve a centralisation, leading one agent to choosing a simpler strategy once another agent has learnt a useful strategy. On the other hand, QMIX employs a more complex network to centralise the agents instead of only adding their rewards, which helps the network to capture more complex interrelationships between different agents and encourage each one to learn. This leads to VDN achieving higher rewards faster than QMIX, but being more vulnerable to perturbations.

\begin{figure*}[htb]
\vspace{-3mm}
\centering
\begin{subfigure}{1\linewidth}
    \centering
    \includegraphics[width=0.252\linewidth]{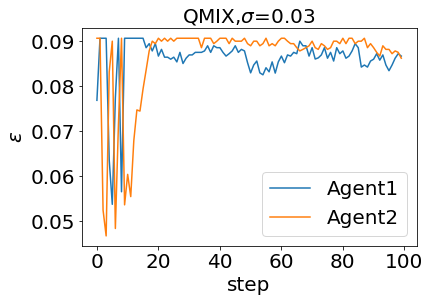}
    \includegraphics[width=0.244\linewidth]{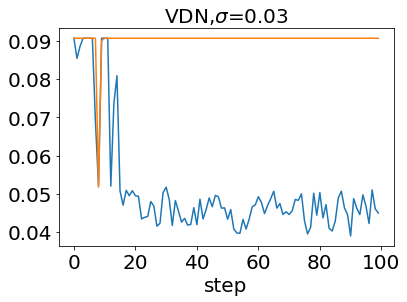}
    \includegraphics[width=0.244\linewidth]{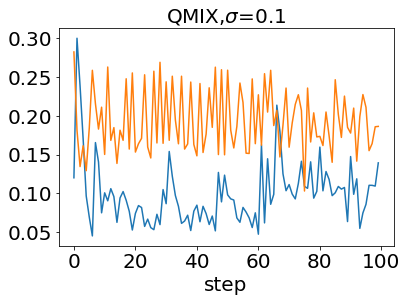}
    \includegraphics[width=0.244\linewidth]{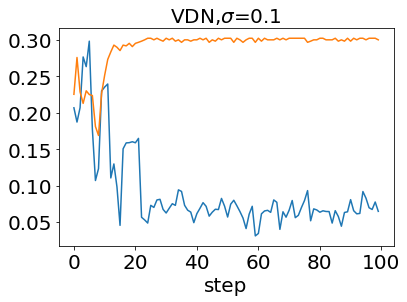}
    \vspace{-6mm}
    \caption{Certified bound of perturbation for actions of each state of each agent in Checkers}
    \label{fig:bound1}
 \end{subfigure}
 \begin{subfigure}{1\linewidth}
    \centering
        \vspace{2mm}
    \includegraphics[width=0.25\linewidth]{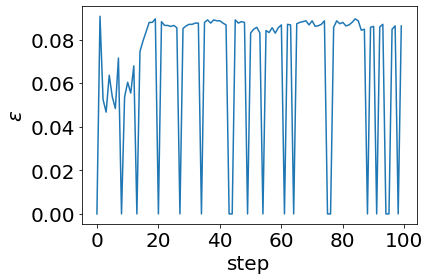}
\includegraphics[width=0.244\linewidth]{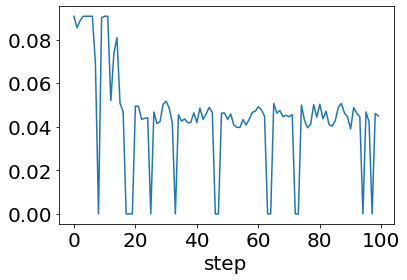}
\includegraphics[width=0.244\linewidth]{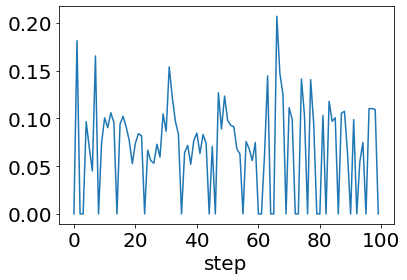}
\includegraphics[width=0.244\linewidth]{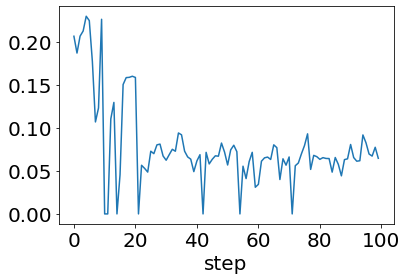}
\vspace{-6mm}
    \caption{Certified bound of perturbation for actions of each state in Checkers}
     \label{fig:bound2}
 \end{subfigure}
  \begin{subfigure}{1\linewidth}
    \centering
        \vspace{2mm}
    \includegraphics[width=0.25\linewidth]{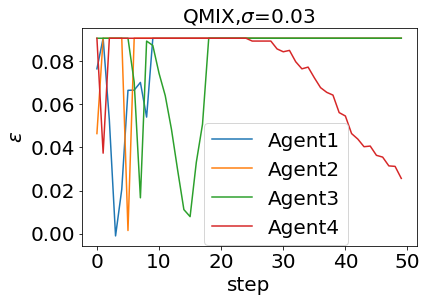}
    \includegraphics[width=0.244\linewidth]{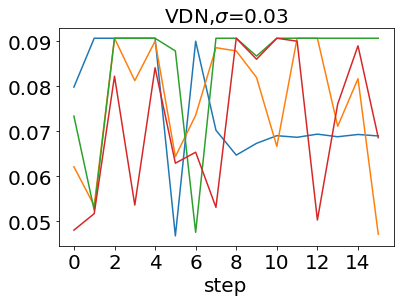}
    \includegraphics[width=0.244\linewidth]{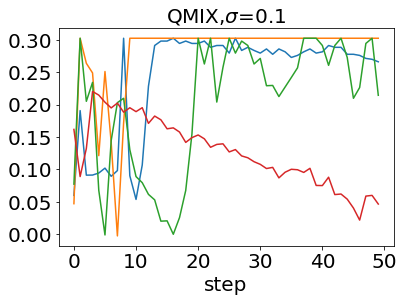}
    \includegraphics[width=0.244\linewidth]{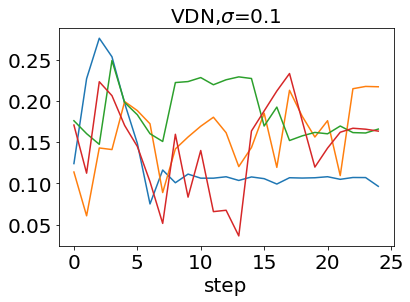}
    \vspace{-6mm}
    \caption{Certified bound of perturbation for per-state action of each agent in Switch.}
     \label{fig:bound3}
 \end{subfigure}
 \begin{subfigure}{1\linewidth}
    \centering
        \vspace{2mm}
    \includegraphics[width=0.252\linewidth]{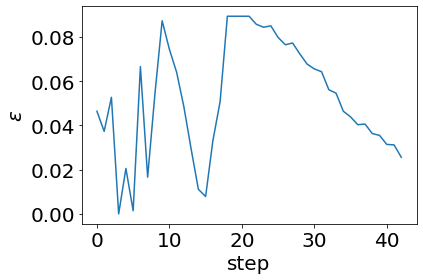}
\includegraphics[width=0.244\linewidth]{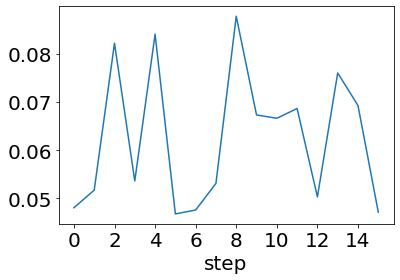}
\includegraphics[width=0.244\linewidth]{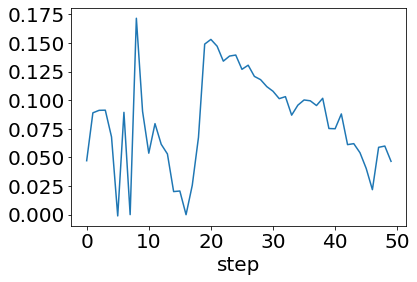}
\includegraphics[width=0.244\linewidth]{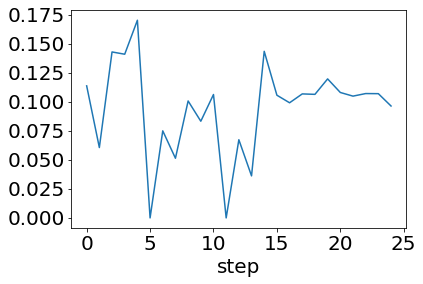}
\vspace{-6mm}
    \caption{Certified bound of perturbation for actions of each state in Switch}
     \label{fig:bound4}
     \vspace{-2mm}
 \end{subfigure}
\caption{Certified robustness for per-step action.}
\label{Fig:agents}
\vspace{-3mm}
\end{figure*}

\subsection{Evaluate the robustness for each state}

In Figure \ref{Fig:agents}, we present the certified perturbation bounded by $l_2$ norm for each agent and for all agents at each state separately. We see that in Checkers with two agents, the certified bound for each agent (trained by QMIX) is close to each other when the smoothing variance $\sigma$ is $0.03$. When we increase the variance to $0.1$, Agent2 engages a slightly higher bound than Agent1, which means that Agent2 is more robust. For the agents trained by VDN, Agent2 always has a much higher robustness bound than Agent1. It may be because when training QMIX, all agents are expected to learn useful strategy, while VDN only need some agents to learn well, and others may use lazier strategies, which results in
a big divergence in the robustness between agents of VDN. 
In Switch with four agents, we observe that, by applying our p-value corrected method, the locally certified bound at each step will not always take the minimum bound among all agents and ignore the bound of agents with low impact.

\section{Related work}
\textbf{Adversarial Attacks on DRLs} Existing attack solutions mainly focused on attacking single-agent RL systems, such as \citet{huang2017adversarial,lin2017tactics,kos2017delving,weng2019toward}. For attacking c-MARLs, there are notably two existing works. \citet{lin2020robustness} proposed to train a policy network to find a wrong action that the victim agent is expected to take and set it as the targeted adversarial example. \citet{attackcm2022} then proposed to craft a stronger adversary by using a model-based approach.

\textbf{Robustness Certification of DRLs} \citet{lutjens2020certified} first proposed a certified defence on the observations of DRLs. \citet{zhang2020robust} then provided empirically provable certificates to ensure that the action does not change at each state. However, this method cannot provide robustness certification for the reward if the action is changed under attacks. To tackle this problem, \citet{kumar2021policy} proposed to directly certify the total reward via randomised smoothing-based defence, but this method cannot achieve robustness certification at the action level. \citet{wu2021crop} then proposed a policy smoothing method based on the randomised smoothing of the action-value function, which is chosen as the baseline in this paper for certifying the robustness of global reward under a single-agent scenario. However, all existing methods can only work on single-agent systems. To the best of our knowledge, this paper is the first work to certify the robustness of cooperative multi-agent RL systems.

\section{Conclusion}
We propose the first robustness certification solution for c-MARLs. By combining the FDR-controlling strategy with the importance factor of each agent, we certify the actions for each state while mitigating the multiple testing problem. In addition, a tree-search-based algorithm is applied to obtain a lower bound of the global reward. Our method is also applicable to single-agent RL systems, where it can obtain tighter bound than the state-of-the-art certification methods. 

\section{Acknowledgements}
This work is supported by the Partnership Resource Fund of ORCA Hub via the UK EPSRC under project [EP/R026173/1]. Ronghui was funded by the Faculty of Science and Technology at Lancaster University. We thank the High-End Computing facility at Lancaster University for the computing resources. We also thank Matheus Alves and Peng Gao for proofreading. 

\balance
\bibliography{aaai23}

\end{document}